\documentclass{article}
\usepackage{amsmath,graphicx,mlspconf}

\usepackage{microtype}
\usepackage{graphicx}
\usepackage{subfigure}
\usepackage{booktabs} 

\usepackage{amsmath}
\usepackage{amssymb}
\usepackage{mathtools}
\usepackage{amsthm}
\usepackage{multirow}

\usepackage{hyperref}
\usepackage[capitalize,noabbrev]{cleveref}
\usepackage{xcolor}

\theoremstyle{plain}
\newtheorem{theorem}{Theorem}[section]
\newtheorem{proposition}[theorem]{Proposition}

\theoremstyle{definition}
\newtheorem{definition}[theorem]{Definition}

\theoremstyle{remark}

\usepackage{algorithm}
\usepackage{algorithmic}
\usepackage{forloop}

%
\copyrightnotice{U.S.\ Government work not protected by U.S.\ copyright}

\copyrightnotice{979-8-3503-2411-2/23/\$31.00 {\copyright}2023 Crown}

\copyrightnotice{979-8-3503-2411-2/23/\$31.00 {\copyright}2023 European Union}

\copyrightnotice{979-8-3503-2411-2/23/\$31.00 {\copyright}2023 IEEE}

\toappear{2023 IEEE International Workshop on Machine Learning for Signal Processing, Sept.\ 17--20, 2023, Rome, Italy}


\title{Geodesic Sinkhorn for Fast and Accurate Optimal Transport on Manifolds}
%


%
%

\name{%
   \parbox{\textwidth}{\centering{}Guillaume Huguet$^{1,\star}$%
   \qquad Alexander Tong$^{1,\star}$%
   \qquad María Ramos Zapatero$^2$\\%
   \qquad {Christopher J. Tape}$^2$%
   \qquad {Guy Wolf}$^1$%
   \qquad {Smita Krishnaswamy}$^3$}\thanks{$^*$Equal contribution. This research was partially funded by ESP \textit{Mérite} [G.H.], NSERC Discovery grant 03267 [G.W.], Canada CIFAR AI Chair [G.W.], NIH grant R01GM135929 [G.W., S.K.], Cancer Research UK (C60693 / A23783) [C.J.T.], the Rosetrees Trust (M872 / A2292) [C.J.T.], and the Yale-UCL Collaborative Student Exchange Programme. Correspondence to: guy.wolf@umontreal.ca, smita.krishnaswamy@yale.edu}%
}

\address{%
   $^1$ Université de Montréal; Mila - Quebec AI Institute \\
   $^2$ University College London Cancer Institute 
   $^3$ Yale University 
}


\usepackage{amsmath,amsfonts,bm}
\usepackage{soul} 









\def\eqref#1{equation~\ref{#1}}









\def\1{\bm{1}}








\def\vone{{\bm{1}}}
\def\vmu{{\bm{\mu}}}
\def\vnu{{\bm{\nu}}} 

\def\va{{\bm{a}}}

\def\vf{{\bm{f}}}

\def\vv{{\bm{v}}}
\def\vw{{\bm{w}}}


\def\mA{{\bm{A}}}

\def\mD{{\bm{D}}}

\def\mH{{\bm{H}}}

\def\mL{{\bm{L}}}

\DeclareMathAlphabet{\mathsfit}{\encodingdefault}{\sfdefault}{m}{sl}
\SetMathAlphabet{\mathsfit}{bold}{\encodingdefault}{\sfdefault}{bx}{n}


\def\gC{{\mathcal{C}}}

\def\gG{{\mathcal{G}}}

\def\gP{{\mathcal{P}}}

\def\gT{{\mathcal{T}}}

\def\gX{{\mathcal{X}}}










\newcommand{\KL}{D_{\mathrm{KL}}}



\DeclareMathOperator*{\argmin}{arg\,min}


\newcommand\first[1]{{\bf{#1}}}

\DeclareUnicodeCharacter{0301}{\'{e}}
\usepackage[style=ieee]{biblatex}

\addbibresource{tidy.bib}

\begin{document}

\maketitle

\begin{abstract}
Efficient computation of optimal transport distance between distributions is of growing importance in data science. Sinkhorn-based methods are currently the state-of-the-art for such computations, but require $O(n^2)$ computations. In addition, Sinkhorn-based methods commonly use an Euclidean ground distance between datapoints. However, with the prevalence of manifold structured scientific data, it is often desirable to consider geodesic ground distance. Here, we tackle both issues by proposing Geodesic Sinkhorn---based on diffusing a heat kernel on a manifold graph. Notably, Geodesic Sinkhorn requires only $O(n\log n)$ computation, as we approximate the heat kernel with Chebyshev polynomials based on the sparse graph Laplacian. We apply our method to the computation of barycenters of several distributions of high dimensional single cell data from patient samples undergoing chemotherapy. In particular, we define the barycentric distance as the distance between two such barycenters. Using this definition, we identify an optimal transport distance and path associated with the effect of treatment on cellular data.

\end{abstract}

\section{Introduction}
Optimal Transport (OT) distances or Wasserstein distances are computed by lifting ground distances between points to distances between measures. This distance is computed relative to a ground distance on the support of the distributions, making it more informative than distances based only on a pointwise comparison of the densities. However, to compute the Wasserstein, one needs to find the optimal transport plan from the source distribution to a target distribution; this is a linear programming problem requiring $O(n^3 \log n)$ for discrete distributions of size $n$~\cite{peyre_computational_2020}. 

An efficient modification of the optimal transport problem is to consider entropy-regularized transportation. This formulation is solved with the Sinkhorn algorithm~\cite{sinkhorn1967concerning} by iteratively rescaling a Gaussian kernel based on the distance matrix. It is equivalent to the Schrödinger Bridge problem, for which similar algorithms were developed~\cite{fortet1940resolution,kullback1968probability,knight2013fast}. In the discrete case, it requires $O(n^2)$ for distributions of size $n$, since it relies on matrix-vector products. Furthermore, this formulation allows for fast computation of the discrete barycenter with fixed support (the average distributions w.r.t.\ the Sinkhorn distance). An important drawback of the Sinkhorn algorithm is the necessity of storing and multiplying the pairwise distance matrix with a vector. 

Additionally, the ground distance is commonly chosen as the Euclidean distance. The Euclidean distance is often suboptimal for high-dimensional datasets over larger distances according to the manifold hypothesis,
which says observations lie near a low dimensional (curved) manifold in high dimensional space~\cite{moon_manifold_2018}. For higher dimensional datasets assumed to be sampled from a lower dimensional manifold, using a distance closer to the manifold for OT has shown promising results~\cite{huguet2022manifold,solomon2015convolutional,tong2022embedding,tong2021diffusion}. 

In this work, we present Geodesic Sinkhorn\footnote{\url{https://github.com/KrishnaswamyLab/GeoSinkhorn}}; a Sinkhorn-based method for fast optimal transport with a heat-geodesic ground distance. Our method is based on the geometry of the dataset constructed from a common graph and uses the heat kernel on the graph to defined a heat-geodesic distance. Key to this approach, we will never need to construct or operate on an $n \times n$ distance matrix, and we will only use the sparse Laplacian matrix and sparse matrix-vector products. For sparse graphs, this can be used for $O(n \log n)$ computation of the Sinkhorn distance with a manifold ground distance, improving on the $O(n^2)$ implementations based on dense matrices.

Increasing the state-of-the art efficiency in Sinkhorn computation opens us up to being able to perform complex operations on large groups of datasets. In particular, we consider interpolating between datasets and show that using our heat-geodesic distance improves the interpolation accuracy compared to OT with Euclidean distance. The barycenter corresponds to the average distribution of a set of distributions. Our method allows for finer-grained barycenters on a data manifold, which motivates us to define a novel notion of dissimilarity between families of distributions called {\em barycentric distance}.

We apply the barycentric distance to single cell data from patient-derived cancer organoids (PDOs) to assess the effect of treatments (such as drugs and chemotherapy). Here we have one set of PDOs from control conditions, and another set that are treated. The treatment effect is thus the distance between these barycenters. In addition, we use Geodesic Sinkhorn's barycenter to compare the effect from one family of distributions to another.

Our main contributions include: 
    (1) A new method for computing optimal transport distances on a manifold called Geodesic Sinkhorn, which is highly efficient in time and memory.
    (2) Defining the barycentric distance; a novel distance between \emph{families} of distributions, and showing its utility in deriving treatment effect from control and treated patient samples.

\section{Related Work}

Geodesic Sinkhorn is related to prior work linking the entropy-regularized optimal transport problem triangular mesh with the heat operator~\cite{crane_geodesics_2013,solomon2015convolutional}, but using different graph filtering techniques. These approaches approximate the application of the heat kernel to a vector by discretizing the heat equation and solving systems of linear equations. This technique was used in different contexts, either with the cotangent Laplacian~\cite{solomon2015convolutional} or to learn a ground metric~\cite{heitz2021ground}. Solving these systems for each Sinkhorn iteration can be done efficiently with a sparse Cholesky decomposition. However, this method's efficiency depends mainly on the efficiency of the Cholesky decomposition which can be slow depending on the sparsity pattern is $O(n^3)$ for an $n \times n$ matrix, and necessitates solving $2K$ systems of linear equations per Sinkhorn iteration, where $K$ is the number of sub-steps in the backward Euler discretization. 



\section{Preliminaries}

In this section, we start by reviewing the basics of OT and the Wasserstein distance, as well the Sinkhorn distance. Then we review two notions fundamental to our method; the heat equation on a graph and the Chebyshev approximation of the heat kernel.


\subsection{Wasserstein Distance}
In the following, we assume that all distributions admit a density or a probability mass function, and we use the same notation for both. Let $\mu, \nu$ be two probability distributions on a measurable space $\gX\subseteq \mathbb{R}^d$ with metric $d(\cdot,\cdot)$, let $\Pi(\mu, \nu)$ be the set of joint probability distributions $\pi$ on the space $\gX\times\gX$ where, for any measurable subset $\omega \subset \gX$, $\pi(\omega \times \gX) = \mu(\omega)$ and $\pi(\gX \times \omega) = \nu(\omega)$. The $p$-Wasserstein distance is defined as:
\vspace{-2mm}
\begin{equation}\label{eq: wasserstein}
    W_p(\mu,\nu) := \left (\inf_{\pi \in \Pi(\mu, \nu)} \int_{\gX^2} d(x,y)^p \mathrm{d}\pi(x, y) \right)^{1/p}.
\end{equation}
In the following, we consider $p = 2$. An exact algorithm based on linear programming can solve this problem in $O(n^3 \log n)$ time for discrete distributions of size $n$.

\subsection{Sinkhorn Distances}
The \textit{Kullback-Leibler} (KL) divergence between $\pi$ and some strictly positive $K$ on $\gX\times\gX$ is defined as
\begin{equation}\label{eq:kl}
    \KL(\pi | K) := \int_{\gX^2} \left ( \ln \frac{\pi(x,y)}{K(x,y)} - 1 \right ) \mathrm{d}\pi(x,y).
\end{equation}
The Sinkhorn distance\footnote{With a slight abuse of language we use the term distance, although the entropy-regularized formulation does not respect the identity of indiscernibles.} is a relaxation of \eqref{eq: wasserstein} where the infimum is over all coupling in $\{\pi\in\Pi(\mu,\nu)|\,\KL(\pi|\mu\times\nu)\leq \xi\}$ for $\xi>-1$. Introduced in \cite{cuturi2013sinkhorn}, the optimization of this distance can be solved by considering the entropy-regularized transport 

\begin{equation}\label{eq:entropy_reg} 
    W_{d,\lambda}^2(\mu, \nu) := \left(\inf_{\pi \in \Pi(\mu, \nu)}  \int_{\gX^2} d(x,y)^2 \mathrm{d}\pi(x, y) - \lambda H(\pi)\right)^{1/2},
\end{equation}
where we define the entropy of a coupling $\pi$ as $H(\pi) := - \int \ln \pi(x, y)\mathrm{d}\pi(x, y)$, and $\lambda>0$. This formulation converges to the Wasserstein distance as $\lambda \to 0$, and can be solved with the Sinkhorn algorithm with complexity of the order $O(n^2 / \epsilon)$ for discrete distributions of size $n$ \cite{cuturi2013sinkhorn}. In the discrete case, the transport matrix $\boldsymbol{\pi}$ admits the form $\text{diag}(\vv)K_\lambda\text{diag}(\vw)$, where $\vv,\vw$ are vectors of size $n$. The Sinkhorn algorithm iteratively updates the vectors as $(\vv,\vw) \leftarrow (\mu./K_\lambda\vw, \nu./K^\prime_\lambda\vv)$, where $K_\lambda:=e^{- d(x,y)^2 / \lambda}$. 

Following \cite{solomon2015convolutional}, using the kernel $K_\lambda$
gives an alternative interpretation of the Sinkhorn distance as
\begin{equation}\label{eq: reg_OT}
    W_{d,\lambda}^2(\mu, \nu) = \lambda^{1/2} \left ( 1 + \min_{\pi \in \Pi(\mu, \nu)} \KL(\pi | K_\lambda) \right )^{1/2}.
\end{equation}

The problem in \eqref{eq:entropy_reg} is strictly convex and continuous yielding a unique minimizer. In the discrete case, this leads to an algorithm for the entropy-regularized Wasserstein distance based on the Sinkhorn algorithm enforcing the marginal constraints on the kernel $K_\lambda$ while minimizing the distance as quantified by $\KL$.

The underlying metric $d(\cdot,\cdot)$ is generally unknown, thus the kernel $K_\lambda$ cannot be evaluated. The authors of~\cite{solomon2015convolutional} proposed to approximate $K_\lambda$ with the heat kernel $\mathcal{H}_t(x,y)$ on $\gX$. According to Varadhan's formula~\cite{varadhan_behavior_1967}, the geodesic distance on a manifold can be recovered from the heat transfer at small timescales as 
\begin{equation}
    d(x,y)^2 = \lim_{t \to 0^+} -4t \ln \mathcal{H}_t(x,y).
\end{equation}
Hence, motivating the use of the heat-geodesic distance $d_{\mathcal{H}}^2(x,y) := -4t \ln \mathcal{H}_t(x,y)$, with associated kernel
    $K_\lambda(x,y) = \mathcal{H}_{\lambda / 4}(x,y).$
Interestingly, Sinkhorn-based methods admit an efficient algorithm to solve the barycenter problem which we present next.

\subsection{Interpolation with discrete support} By constraining the support to a set $\gX$ (or a graph), we can efficiently interpolate between more than two distributions. The barycenter problem \cite{solomon2015convolutional,peyre_computational_2020,cuturi2014fast} generalizes the notion of average between points to an average between distributions. For a set of $m$ distributions $\{\mu_1,\dotsc,\mu_m\}$ supported on $\gX$, the objective is to find a distribution minimizing the average distance
\begin{equation*}
    \mu^*:=\argmin_{\mu\in\gP(\gX)} \sum_{i=1}^m \alpha_i W_d^p(\mu,\mu_i)^p,
\end{equation*}
where $\gP(\gX)$ denotes the space of probability distributions supported on $\gX$, and $\{\alpha_1,\dotsc,\alpha_m\}$ are non-negative weights. Finding the barycenter is a challenging optimization problem, however the barycenter for Sinkhorn-based methods admits an efficient computation. It involves updating $m$ vectors $\vv_i,\vw_i$, which define a transport plan from $\mu_i$ to the barycenter $\mu^*$. 
The support of the barycenter is constrained to $\gX$, for most Sinkhorn-based methods the size of $\gX$ needs to be small for computational reason. Our method does not suffer from such a limitation. Hence, we can consider barycenter with greater expressivity, and interpolate between large sets of distributions.

\subsection{Heat Diffusion on a Graph} Consider an undirected graph $\gG=(V,E)$ with a set $V$ of $n$ vertices and a set of edges $E$, and its weighted adjacency matrix $\mA$ with non-negative edge weights, and the diagonal degree matrix $\mD$, where $\mD_{ii} := \sum_k \mA_{ik}$. We define the combinatorial Laplacian as $\mL:=\mD - \mA$, for any function $f:V\to\mathbb{R}$ we have $(\mL f)(v) = \sum_u a_{u,v} (f(v)-f(u))$. The combinatorial Laplacian is a symmetric positive semi-definite matrix, and has an eigendecomposition $\mL = \Psi \Lambda \Psi^T$ with orthonormal eigenvectors $\Psi$ and diagonal eigenvalue matrix $\Lambda = \mathrm{diag}(\lambda_1, \lambda_2, \ldots, \lambda_n)$, such that $0 \le \lambda_1 \le \lambda_2 \le \cdots \le \lambda_n$. The combinatorial Laplacian is a natural extension of the negative of the Laplacian operator to a graph. For a signal $\vf_0\in\mathbb{R}^n$ on $\gG$, the diffusion of $\vf_0$ on the graph evolves according to the heat equation 
\[
    \frac{d}{dt}\vf(t) + \mL \vf(t) = \bm{0},\,s.t. \quad \vf(0) = \vf_0 \quad t\in\mathbb{R}^+.
\]
The heat kernel solves this ODE, it is defined by the matrix exponential $\mH_t := e^{-t\mL}$. By orthogonality of the eigenvectors of $\mL$, we can write $\mH_t = \Psi e^{-t\Lambda}\Psi^T$ and $\vf(t) = \mH_t\vf_0$. Computing $\mH_t$ by eigendecomposition would require $O(n^3)$ operations. Recall that, for the Sinkhorn algorithm, we are only concerned with the \emph{application} of the heat operator $\mH_t$ on a signal $\vf\in\mathbb{R}^n$. 
For larger diffusion time, the heat kernel converges to its eigenvector associated to the lowest eigenvalues of the Laplacian, hence, intuitively, the heat kernel corresponds to a low-pass filter. In Geodesic Sinkhorn, we use Chebyshev polynomials~\cite{shuman_chebyshev_2011,marcotte2022fast} to approximate the application of the heat operator to a signal. For a short timescale $t$, using the heat kernel accounts for using the geodesic distance as ground distance in the entropy-regularized OT formulation~\eqref{eq:entropy_reg}.

\subsection{Chebyshev Polynomials} Polynomial sequences are often used to approximate functions or operator. With Chebyshev polynomials, we can approximate the application of the matrix exponential $\mH_t = e^{-t\mL}$ to a signal $\vf$ on the graph. An attractive property of Chebyshev polynomials is that the approximation error decays exponentially with the maximum degree $K$. They are defined by the recursive relation $\{T_k\}_{k\in\mathbb{N}}$ with $T_0(y)=0$, $T_1(y)=y$ and $T_k(y) = 2yT_{k-1}(y) - T_{k-2}(y)$ for $k\geq 2$. On $[-1,1]$ these polynomials are orthogonal w.r.t. the weight $(1-y)^{-1/2}$, and can be used to express the operator $\mH_t$. Assuming the largest eigenvalue $\lambda_n\leq2$, we can write
\begin{equation*}
    \mH_t = \frac{b_0}{2} + \sum_{k=1}^\infty b_k T_k(\mL-\bm{Id}),
\end{equation*}
where the $K+1$ scalar coefficient $\{b_k\}$ depend on time and can be evaluated with the Bessel function. The approximation of $\mH_t$ is based on the first $K$ term of the series which we note $p_K(\mL,t)$.
It results in $K$ matrix-vector products which can be efficient since, in general, $\mL$ is a sparse matrix. On a $m$-nearest neighbor graph, this can be $O(Kmn/\lambda)$, where $\lambda$ is a regularization parameter. Chebyshev polynomials admits interesting theoretical properties and are known to converge faster than other polynomials~\cite{marcotte2022fast, huang2020fast}. The choice of the parameter $K$ is related to the number of neighbors or the connectivity of the graph. For small diffusion time, hence only diffusing in a local neighborhood, the approximation is accurate even with a small $K$. As the diffusion time increases, $K$ has to increase in order to consider a larger neighborhood around a node. For OT, we consider small diffusion time, and we found that our results were stable for all $K$ greater than 10. 



\section{Geodesic Sinkhorn Distances}


We define the Geodesic Sinkhorn distance between any signals or distributions on a graph $\gG$ by the entropy-regularized OT with the heat kernel $\mH_t$ on the graph. This construction is also valid between any point cloud datasets. In that case, for $m$ datasets $\{\mathsf{X}_1,\dotsc,\mathsf{X_m}\}$ sampled from a set of distributions $\{\mu_1,\dotsc,\mu_m\}$, we construct a common graph using an affinity kernel on the $m$ datasets and compare two samples by taking the distance between two indicator functions on the graph. We approximate the heat kernel $\mH_t$ with Chebyshev polynomials $p_K(\mL,l)$ of order $K$. In Algorithm~\ref{alg:sinkhorn_geo}, we present the main steps to evaluate the Geodesic Sinkhorn. It is based upon Sinkhorn iterations~\cite{sinkhorn1967concerning, cuturi2013sinkhorn}, where $\oslash$ and $\odot$ denote respectively the elementwise division and multiplication. Note that, as opposed to the usual Sinkhorn algorithm, we never have to store a dense $n\times n$ distance matrix, but only the usually sparse graph Laplacian.

\begin{definition} The Geodesic Sinkhorn distance between two distributions $\mu$, $\nu$ on a graph $\gG$ is
\begin{equation*}
    W_{\mH_t}(\mu,\nu):= 4t^{1/2} \left ( 1 + \min_{\pi \in \Pi(\mu, \nu)} \KL(\pi | \mH_t) \right )^{1/2}.
\end{equation*}
\end{definition}

\begin{algorithm}[tb]
    \caption{Geodesic Sinkhorn}
    \label{alg:sinkhorn_geo}
\begin{algorithmic}
    \STATE {\bfseries Input:} $n \times n$ Laplacian $\mL$, distributions (signals) $\vmu$, $\vnu$ on $\gG$, maximum Chebyshev degree $K$, regularization $\lambda$, $\va$ vertices weights.
    \STATE {\bfseries Output:} $W_{d_{\mH},\lambda}^2(\vmu, \vnu)$
    \STATE \textit{// Initialization}
    \STATE $\vw \leftarrow \vone$
    \STATE \textit{// Sinkhorn iterations}
    \FOR{$j=1,2,\dotsc$}
        \STATE $\vv \leftarrow \vmu \oslash p_K(\mL,t)(\va\odot\vw)$
        \STATE $\vw \leftarrow \vnu \oslash p_K(\mL,t)(\va\odot\vv)$
    \ENDFOR
    \STATE {\bfseries Return:} $4t\,\sum[\va \odot (\vmu \odot \ln \vv + \vnu \odot \ln \vw)]$ 
\end{algorithmic}
\end{algorithm}




In the following proposition, we find the ground distance implicitly used in the optimal transport defined by Geodesic Sinkhorn. We use $\simeq$ for the equivalence relation between distances. 

\begin{proposition}There exists a maximum Chebyshev polynomial degree $K$ such that the ground distance in Geodesic Sinkhorn is equivalent to the one based $\mH_t$
\begin{align*}
   -4t\log  (p_K(\mL,t))_{ij} &\simeq -4t\log (\mH_t)_{ij} 
\end{align*}
In particular, the Wasserstein distances with these ground distances are equivalent. 
\end{proposition}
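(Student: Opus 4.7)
The strategy is to exploit the exponential convergence of the Chebyshev approximation $p_K(\mL,t) \to \mH_t$ and then lift this convergence through the $-4t\log(\cdot)$ map entry-wise. Since the coefficients $b_k$ in the Bessel-function expansion decay super-exponentially, truncating after $K$ terms yields
\[
\|\mH_t - p_K(\mL,t)\|_{\mathrm{op}} \;\leq\; C(t)\,e^{-\alpha(t) K}
\]
for constants $C(t),\alpha(t)>0$ depending only on $t$ and $\lambda_n$, and hence the same exponential entry-wise bound $|(p_K(\mL,t))_{ij} - (\mH_t)_{ij}| \leq C(t)\, e^{-\alpha(t) K}$ for every pair $(i,j)$.

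Next I would use that, on a connected graph and for fixed $t>0$, the heat kernel is strictly positive, so $m_t := \min_{i,j}(\mH_t)_{ij} > 0$. This follows from a Trotter-product argument applied to $\mH_t = e^{-t\mL}$ together with the fact that $e^{t\mA}$ has strictly positive entries whenever the graph is connected. Choosing $K$ large enough that $C(t)\,e^{-\alpha(t) K} \leq m_t/2$ then forces the ratio $(p_K(\mL,t))_{ij}/(\mH_t)_{ij}$ to lie in the fixed compact subinterval $[1/2,\,3/2]$ of $(0,\infty)$, uniformly in $i,j$.

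On $[1/2, 3/2]$ the map $\log$ is Lipschitz, so the decomposition
\[
-4t\log(p_K(\mL,t))_{ij} \;=\; -4t\log(\mH_t)_{ij} \;-\; 4t\log\!\bigl((p_K(\mL,t))_{ij}/(\mH_t)_{ij}\bigr)
\]
shows that the two ground distances differ by a correction bounded uniformly in $i,j$ by some $\varepsilon_K \to 0$. On the finite vertex set both quantities are bounded above and below by positive constants, so this additive control upgrades to a multiplicative $c_1 \leq d_{p_K}/d_{\mH_t} \leq c_2$, i.e.\ the bi-Lipschitz equivalence denoted by $\simeq$. To transfer this to the Wasserstein distances I would insert the pointwise bound directly into the KL-regularized functional in~\eqref{eq: reg_OT}: since $\Pi(\mu,\nu)$ is the same in both problems and the ground-cost matrices are equivalent, the infima inherit the same equivalence.

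The main obstacle is the blow-up of $\log$ near zero: operator-norm convergence of $p_K$ to $\mH_t$ alone does not prevent individual entries of $p_K(\mL,t)$ from being small or negative, on which $-4t\log$ would be undefined or unbounded. The resolution is the strict positivity of the continuous heat kernel on a connected graph, which forces a lower bound on $K$ of essentially order $\log(1/m_t)/\alpha(t)$ in terms of the spectral gap and the diameter of the graph; this is the quantitative content of the phrase ``there exists a maximum Chebyshev polynomial degree $K$'' in the statement.
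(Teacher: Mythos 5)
Your proof follows essentially the same route as the paper's: exponential decay of the Chebyshev truncation error gives an entry-wise sandwich $c\,(\mH_t)_{ij} \leq (p_K(\mL,t))_{ij} \leq C\,(\mH_t)_{ij}$, which passes through the logarithm to an equivalence of ground distances. Your additional care about the strict positivity of the heat kernel (so that $\epsilon$ can be taken below $\min_{ij}(\mH_t)_{ij}$ and the ratio bounds stay in a compact subinterval of $(0,\infty)$) is exactly the point the paper leaves implicit in the phrase ``for any $\epsilon>0$ sufficiently small,'' so this is a welcome clarification rather than a different argument.
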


\begin{proof} Because the approximation error decreases exponentially in $K$ \cite{marcotte2022fast}, we have that for any $\epsilon>0$ sufficiently small there exist $K_i$ such that $(H_t)_{ij}-\epsilon < (p_K)_{ij}<(H_t)_{ij}+\epsilon$. Choose $K$ such that this is true for all vertices $K:=\max\{K_1\dotsc,K_n\}$. We define 
\begin{align*}
    c := \min_{ij}\frac{(\mH_t)_{ij}-\epsilon}{(\mH_t)_{ij}} \text{ and } C := \max_{ij}\frac{(\mH_t)_{ij}+\epsilon}{(\mH_t)_{ij}}
\end{align*}
and we have the equivalence between the distances since
\begin{equation*}
    c(\mH_t)_{ij} \leq (p_K(\mL,t))_{ij} \leq C(\mH_t)_{ij} \text{ for all }ij \in [n],
\end{equation*}
and since the logarithm is a monotonic function. 
\end{proof}

In~\cite{solomon2015convolutional,heitz2021ground}, using the Euler implicit discretization results in a ground cost of the form $-\epsilon \ln ((\mathbf{Id} - \frac{\epsilon}{4K}\mL)^{-K})$, where $\mathbf{Id}$ is the identity matrix, and can be seen as another approximation for the matrix exponential.

The efficiency of Geodesic Sinkhorn improves the notion of barycenter as it is possible to consider much larger graph $\gG$, thus a finer grained support of the barycenter. This leads us to define a novel distance between \emph{families} of distributions.

\begin{definition}
    For two finite families of distributions $\gT$ and $\gC$ supported on $\gG$, we define the barycentric distance between the families $\gT,\gC$ as 
    \[\gamma(\gT, \gC): = W_{\mH_t}(\mu^*_\gT,\mu^*_\gC)\]
    where $\mu^*_\gT,\mu^*_\gC$ are respectively the barycenters of $\gT$ and $\gC$.
\end{definition}
The previous definition is valid for any distances between distributions or barycenters. However, OT barycenters are known to be more informative than others~\cite{cuturi2014fast}. We will further explore this comparison in our experiments. We use it to distinguish between two groups in a medical setting where a set of patients received a treatment (defining the family $\gT$), and another set acts as a control family $\gC$. Following this idea, we define a notion of effect between two families.

\begin{definition}\label{def: bary_effect}
    For two family of distributions $\gT$ and $\gC$ supported on $\gG$, define the Expected Barycenter Effect of $\gT$ as 
    \begin{equation*}
        \tau(\gT):= \mathbb{E}_{\mu_\gT^*}(Y_t) - \mathbb{E}_{\mu_\gC^*}(Y_c),
    \end{equation*}
    where $\mu^*_\gT,\mu^*_\gC$ are respectively the barycenters of $\gT$ and $\gC$, and the features $Y_c\sim\mu_\gC^*$ and $Y_t\sim\mu_\gT^*$.
\end{definition}

Note that we compute the average on the family of distributions instead of the average on their support, hence we evaluate their expectations in a closed form. This definition also extends to a conditional equivalent where families of distribution can be subdivided with discrete covariate variables. When the barycenters are computed with the total variation, this definition is equivalent to the naive Average Treatment Effect(ATE)~\cite{imbens2015causal}; i.e. difference of empirical means.

\vspace{-2mm}
\section{Results}\label{sec:results}
We demonstrate the accuracy and efficiency of the Geodesic Sinkhorn distance on two tasks:
(1) Nearest-Wasserstein-neighbor calculation on simulated data with manifold structure similar to the setup of \cite{tong2021diffusion};
(2) A newly defined Barycentric distance between families of distributions computed to quantify the effect of a treatment on patient-derived organoids. In Appendix~\ref{sec:app_results:time_series}, we present additional results on time series interpolation.

\subsection{Nearest-Wasserstein-neighbor distributions}


In this experiment, we compare our method with Sinkhorn~\cite{cuturi2013sinkhorn}, and LR Sinkhorn~\cite{scetbon2021low}, both algorithms with Euclidean and squared Euclidean ground distance, with DiffusionEMD~\cite{tong_trajectorynet_2020}, and Sinkorn with Euler approximation of the heat filter. We created 15 Gaussian distributions sampled randomly on a swiss roll dataset, and sampled 10k observations from each distribution. We rotated the observations in 10 dimensions. We consider a k-nearest neighbors task on these distributions. We evaluate the methods with the ground truth, since we know the exact geodesic distance on the manifold. In Tab.~\ref{tab:swiss_roll}, we report the average and standard deviation over 10 seeds of the Spearman and Pearson correlations to the ground truth, and the runtime in seconds with and without the computation of the graph. Our method is the most accurate while being much faster than other Sinkhorn-based methods.

\begin{table}[]
    \centering
    \caption{KNN task for 15 distributions, best score highlighted is \first{bold}. Geodesic Sinkhorn is the most accurate, while being faster than other Sinkhorn-based methods.}
    \label{tab:swiss_roll}
\resizebox{\columnwidth}{!}{%
\begin{tabular}{llllll}
\toprule
Method &        SpearmanR &          PearsonR &              P@5 &    Time(s) no graph &             Time(s) \\
\midrule
Diffusion EMD            &   0.62$\pm$0.097 &   0.736$\pm$0.023 &   0.66$\pm$0.072 &     \bf{2.845$\pm$0.135 }&     \bf{7.877$\pm$0.531} \\
Sinkhorn $W1$     &  0.387$\pm$0.044 &   0.523$\pm$0.036 &  0.471$\pm$0.028 &   112.406$\pm$0.206 &   112.406$\pm$0.206 \\
Sinkhorn $W2$   &  0.411$\pm$0.036 &   0.485$\pm$0.027 &  0.492$\pm$0.053 &   133.686$\pm$5.234 &   133.686$\pm$5.234 \\
LR Sinkhorn $W1$   &   -0.31$\pm$0.07 &  -0.131$\pm$0.086 &  0.237$\pm$0.037 &  578.631$\pm$107.82 &  578.631$\pm$107.82 \\
LR Sinkhorn $W2$ &  0.366$\pm$0.048 &   0.379$\pm$0.051 &  0.447$\pm$0.023 &   204.191$\pm$3.656 &   204.191$\pm$3.656 \\
Euler Sinkhorn          &  0.776$\pm$0.061 &   0.718$\pm$0.009 &  0.728$\pm$0.072 &  449.752$\pm$42.985 &  455.059$\pm$43.083 \\
Geodesic Sinkhorn            &  \bf{0.847$\pm$0.023} &   \bf{0.754$\pm$0.016} &  \bf{0.833$\pm$0.034} &    10.176$\pm$1.249 &    16.682$\pm$1.705 \\
\bottomrule
\end{tabular}}
    
\end{table}

\subsection{Barycentric distance}

We test if we can identify a linear treatment effect with the Expected Barycenter Effect~(EBE). In this experiment, we create a control family of distributions $\gC$ of ten standard Gaussian distributions. The treatment group consists of nine Gaussian distributions $\mathcal{N}(5,1)$, and one outlier centered at different means. For each distribution, we sample 500 observations, and reproduce the experiment over ten seeds. In Tab.~\ref{tab:ATE_perturb}, we report the EBE and its standard deviation with the Geodesic Sinkhorn, the Total Variation (TV) distance, and Sinkhorn. Since the TV only compares the mean, it is sensitive to the outlier, whereas our method can identify the true treatment effect.

\begin{table}
    \centering
    \caption{Expected Barycenter Effect~(EBE) with one outlier distribution centered at -60,-30,0, or 5. Comparison using the barycenter from Sinkhorn, total variation, or Geodesic Sinkhorn. Values closer to the real treatment effect of 5 are better.}
\label{tab:ATE_perturb}
\resizebox{\columnwidth}{!}{%
\begin{tabular}{llll}
\toprule
{Outlier} & EBE Geo Sinkhorn   &   EBE Sinkhorn &  EBE TV \\
\midrule
-60  &       \bf{5.016$\pm$0.226}  &  -0.103$\pm$0.005 &  -1.429$\pm$0.144 \\
-30  &       \bf{5.053$\pm$0.196} &   0.355$\pm$0.049 &   1.571$\pm$0.144 \\
0  &       4.917$\pm$0.315 &         \textbf{4.954$\pm$0.157} &   4.571$\pm$0.144 \\
\midrule
No outlier  &       5.059$\pm$0.159  &  \bf{5.054$\pm$0.16} &   5.071$\pm$0.144 \\
\bottomrule
\end{tabular}
}
\end{table}

\subsection{Single-cell signaling data}

 We use single-cell signaling data produced by mass cytometry (MC) for a screening study to compare the treatment effect of different chemotherapies on 10 colorectal cancer (CRC) patient-derived organoids (PDOs)~\cite{zapatero2022pdos}. These PDOs can be grouped into chemoresistant PDOs, that show little-to-no effect when treated with chemotherapies; and chemosensitive PDOs, that present strong shifts in their phenotypes upon treatment. The observations include single-cell data information on the cell cycle and signaling changes upon treatment of PDOs with different CRC treatments at a range of concentrations. 
In Fig.~\ref{fig:comparison_treat_conc}, we present the barycentric distances matrices between treatments a) and between four concentrations of treatment SN-38 (S) b). In both cases, the control groups corresponds to AH and DMSO, the two rightmost columns. We compare the distance matrices between Sinkhorn (left) and our method (right). Our method provide a finer distinction between treatments (Fig. \ref{fig:comparison_treat_conc} top) and concentrations (Fig. \ref{fig:comparison_treat_conc} bottom), especially for the chemosensitive group. As observed in~\cite{zapatero2022pdos}, chemosensitive PDOs show little-to-no response to lower concentrations of SN-38 (S1), but their phenotype shifts very strongly  upon treatment with higher concentrations (S2, S3, and S4) (Fig.~\ref{fig:comparison_treat_conc}~b). When comparing combinations of different treatments (Fig.~\ref{fig:comparison_treat_conc}~a), Geodesic Sinkhorn better resolves the difference between SN-38 (S) alone and in combination with Cetuximab (C), showing that S is the main agent creating the treatment effect and the combination with C does not resolve in a synergistic effect~\cite{zapatero2022pdos}. Note that we only consider the relative magnitude of the distances, since the two algorithm use different ground distances. 


\begin{figure}[ht]
    \centering
    \includegraphics[width=1\linewidth]{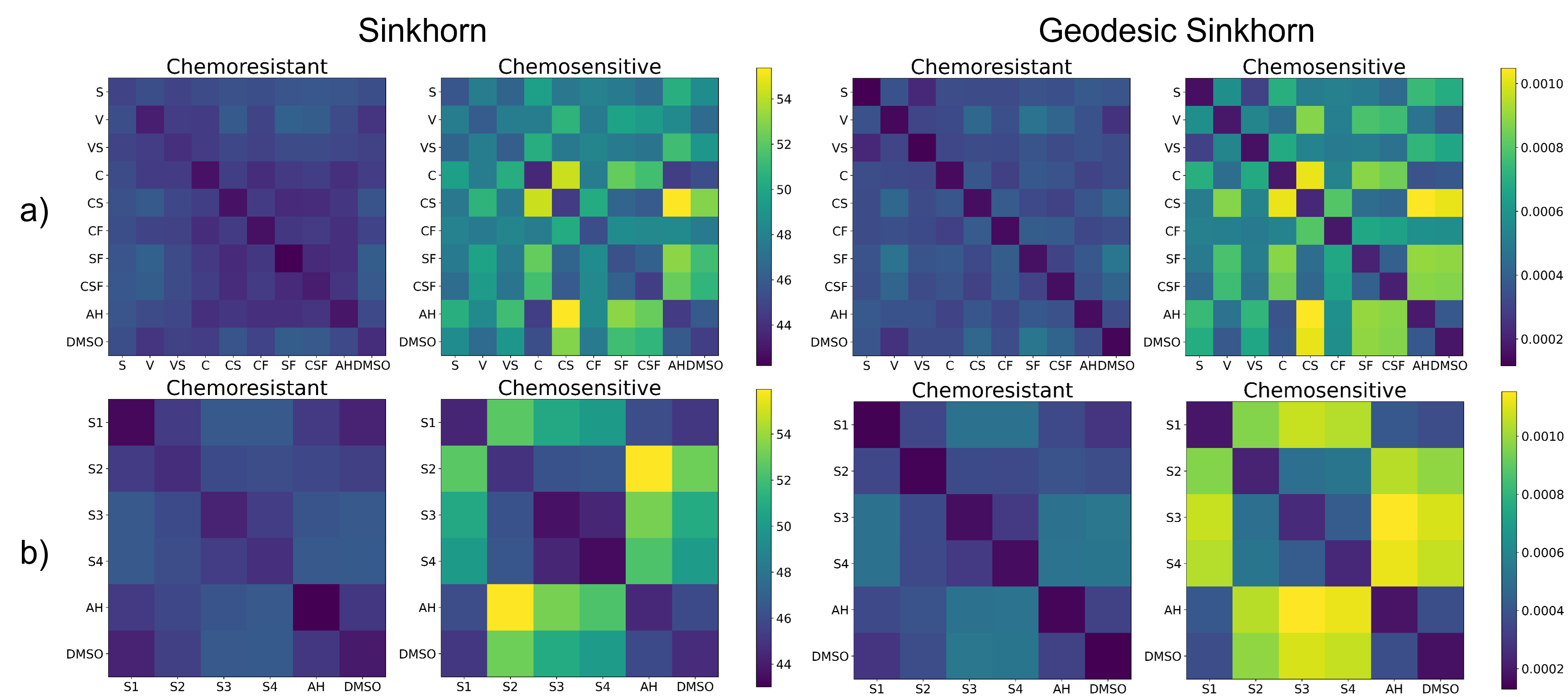}
    \caption{a) Barycentric distances matrices for the Sinkhorn algorithm (left) and our method Geodesic Sinkhorn (right). b)  Barycentric distances matrices between doses of treatment SN-38, for four concentrations S1 S2 S3 S4. Control groups correspond to AH and DSMO. Geodesic Sinkhorn provides a clearer distinction between treatments, and doses.}
    \label{fig:comparison_treat_conc}
\end{figure}


\section{Conclusion}

In this work, we considered the use of OT for graphs and large datasets in high dimensions potentially sampled from a lower dimensional manifold. We proposed Geodesic Sinkhorn, a fast implementation of the Sinkhorn algorithm using the graph Laplacian and Chebyshev polynomials. Our method is well adapted for large and high dimensional datasets as it is defined with a geodesic ground distance, which takes into account the underlying geometry of the data, and requires less computation time and less memory. On a synthetic dataset, we showed that Geodesic Sinkhorn is much faster than other Sinkhorn-based methods while being more accurate. With the Wasserstein barycenter, we defined the barycentric distance to compare entire families of distributions, and the expected barycenter effect, then applied both methods to a large PDO drug screen dataset.



\printbibliography

\clearpage
\appendix
\onecolumn

\section{Appendix}
\begin{refsection}
\subsection{Time series Interpolation}\label{sec:app_results:time_series}

To evaluate Geodesic Sinkhorn's performance on inferring dynamics, we test its performance on a task for time series interpolation. In this setting the most used datasets are Embryoid Body~\cite{moon_visualizing_2019}, and WOT~\cite{schiebinger_optimal-transport_2019}. We curated ten scRNA-seq datasets; WOT~\cite{schiebinger_optimal-transport_2019}, Clark~\cite{clark_single-cell_2019}, Embryoid Body~\cite{moon_visualizing_2019}, and seven more from the 2022 multimodel single-cell integration challenge\footnote{\url{https://www.kaggle.com/competitions/open-problems-multimodal/}} to test our method. The observations are the gene expression of single cells from a distribution evolving through time. The Waddington-OT dataset (WOT) has 38 timpoints of a developing stem cell population over 18 days collected roughly every 6-12 hours. This is the most densely sampled dataset in time. The Embryoid Body dataset is a single-cell RNA seq dataset of developing embryoid bodies from 0-30 days with 5 datasets collected over time. The Clark dataset contains 12 samples over 9 unique timepoints of a developing mouse retina. Finally, the NeurIPS 2022 data contains four donors with single-cell transcriptomic data collected over 4 timepoints for 3 donors with 10X-multiome and 4 combined cite-seq (in the publically released training data) leading to 7 additional time series with 4 timepoints.

The goal is to interpolate the distribution between two timepoints. The number of timepoints in each dataset range from 4 to 40. Here for a dataset with $T$ single-cell distributions $\{\mu_1, \mu_2, \ldots, \mu_T\}$ over time for $t \in [2 \ldots T - 1]$ we compute the exact Euclidean 2-Wasserstein distance between the interpolated distribution $\hat{\mu}_t$ at time $t$ and the ground truth distribution $\mu_t$, $W_2(\hat{\mu}_t, \mu_t)$. Since we interpolate between two distributions we used the McCann interpolant as its support is $\mathbb{R}^d$. We compare our Geodesic Sinkhorn interpolation with either the Sinkhorn Mccann interpolant with Euclidean ground distance ($L^2$ Sinkhorn), Sinkhorn Euler Mccann with the Euler heat approximation in Tab.~\ref{tab:time_interp}. Sinkhorn with Euler approximation ran out of memory on the Clark and WOT datasets.We see that across all 10 the Geodesic Sinkhorn interpolation with the Mccann interpolant outperforms all other methods, hence showcasing the importance of the heat-geodesic distance and our kernel approximation. We also compare Sinkhron Euler and Geodesic Sinkhorn for different nearest neighbors graphs in Tab.~\ref{tab:nn_50} and Tab.~\ref{tab:nn_100}, where Geodesic Sinkhorn outperforms Sinkhorn Euler on most datasets.

\begin{table}[h!]
    \centering
    \caption{Time series interpolation task comparing mean and standard deviation $(\mu \pm \sigma)$ across 5 seeds the 2-Wasserstein metric averaged across time $(1/T-2) \sum_{t \in [2 \ldots T - 1]} W_2(\hat{\mu}_t, \mu_t)$ for 10 single-cell timeseries datasets. Sinkhron Euler ran out of memory on two datasets. Lower is better, best performance on each dataset is \textbf{bold}.}
\begin{tabular}{lrrr}
\toprule
{Dataset} &  {$L^2$ Sinkhorn} &  {Sinkhron Euler} &  {Geo Sinkhorn (ours)} \\
\midrule
Cite Donor0 &   48.545 $\pm$ 0.057 &   46.254 $\pm$ 3.192 &   \textbf{44.440 $\pm$ 0.108} \\
Cite Donor1 &   48.220 $\pm$ 0.055 &   45.897 $\pm$ 3.254 &   \textbf{44.165 $\pm$ 0.103} \\
Cite Donor2 &   50.281 $\pm$ 0.016 &   47.773 $\pm$ 3.958 &   \textbf{45.673 $\pm$ 0.092} \\
Cite Donor3 &   49.339 $\pm$ 0.081 &  46.565 $\pm$ 3.553 &   \textbf{45.022 $\pm$ 0.146} \\
Clark &   13.500 $\pm$ 0.003 &   --   & \textbf{13.288 $\pm$ 0.008} \\
EB &   12.415 $\pm$ 0.008 &   12.298 $\pm$ 0.140 &   \textbf{12.133 $\pm$ 0.011} \\
Multiome Donor0 &   56.648 $\pm$ 0.048 &   55.373 $\pm$ 7.234 &   \textbf{53.431 $\pm$ 0.077} \\
Multiome Donor1 &   54.028 $\pm$ 0.126 &   52.396 $\pm$ 4.394 &   \textbf{50.238 $\pm$ 0.022} \\
Multiome Donor2 &   58.798 $\pm$ 0.155 &   57.182 $\pm$ 5.511 &   \textbf{55.041 $\pm$ 0.058} \\
WOT &   8.096 $\pm$ 0.003 &   -- &  \textbf{7.397 $\pm$ 0.106} \\
\bottomrule
\end{tabular}%
\label{tab:time_interp}
\end{table}

\begin{table}[]
    \centering
\begin{tabular}{lrr}
\toprule
Dataset &  Sinkhorn Euler &  Geodesic Sinkhorn \\
\midrule
Cite Donor0     &                                  48.507 &                          \bf{46.850} \\
Cite Donor1     &                                  48.207 &                          \bf{46.883} \\
Cite Donor2     &                                  50.383 &                          \bf{49.176} \\
Cite Donor3     &                                  49.210 &                          \bf{47.646} \\
Clark           &                                  13.506 &                          \bf{13.378} \\
EB              &                                  12.409 &                          \bf{12.394} \\
Multiome Donor0 &                                  56.676 &                          \bf{55.095} \\
Multiome Donor1 &                                  54.028 &                          \bf{53.952} \\
Multiome Donor2 &                                  58.821 &                          \bf{57.187} \\
WOT &                                   \bf{8.070} &                           8.279 \\
\bottomrule
\end{tabular}
    \caption{Time series interpolation task comparing mean of the 2-Wasserstein for a KNN graph with $50$ neighbors. Lower is better, best score is \textbf{bold}.}
    \label{tab:nn_50}
\end{table}

\begin{table}[]
    \centering
\begin{tabular}{lrr}
\toprule
Dataset &  Sinkhorn Euler &  Geodesic Sinkhorn \\
\midrule
Cite Donor0     &                                  48.573 &                          \bf{47.102} \\
Cite Donor1     &                                  48.197 &                          \bf{47.373} \\
Cite Donor2     &                                  50.298 &                          \bf{49.517} \\
Cite Donor3     &                                  49.290 &                          \bf{48.433} \\
Clark   &                                  13.500 &                          \bf{13.396} \\
EB              &                                  \bf{12.415} &                          12.416 \\
Multiome Donor0 &                                  56.708 &                          \bf{55.562} \\
Multiome Donor1 &                                  54.063 &                          \bf{54.016} \\
Multiome Donor2 &                                  58.802 &                          \bf{57.732} \\
WOT &                                   \bf{8.096} &                           8.253 \\
\bottomrule
\end{tabular}
    \caption{Time series interpolation task comparing mean of the 2-Wasserstein for a KNN graph with $100$ neighbors. Lower is better, best score is \textbf{bold}.}
    \label{tab:nn_100}
\end{table}

\clearpage
\printbibliography[heading=subbibliography]
\end{refsection}

\end{document}